\colorlet{goodcolor}{blue!80!black}
\colorlet{badcolor}{orange!70!black}
\title{Entangled Residual Mappings}
\author
{\small Mathias Lechner~$^{1, 2}$\footnote{Correspondence E-mail: mlechner@mit.edu},  Ramin Hasani~$^{1}$, Zahra Babaiee~$^{3}$, \small Radu Grosu~$^{3}$, Daniela Rus~$^{1}$, Thomas A. Henzinger~$^{2}$,  Sepp Hochreiter~$^{4}$\\
\vspace{0.5em}
\normalfont{\small $^{1}$Massachusetts Institute of Technology (MIT), Cambridge, MA, USA}\\
\normalfont{\small $^{2}$Institute of Science and Technology Austria (IST Austria), Klosterneuburg, Austria}\\
\normalfont{\small $^{3}$Technische Universit\"at Wien (TU Wien), Vienna, Austria}\\
\normalfont{\small $^{4}$Johannes Kepler Universität (JKU) Linz, Austria} \vspace{2em}
}
\begin{document} 



\maketitle 
\thispagestyle{firstpagestyle}

\begin{abstract}
Residual mappings have been shown to perform representation learning in the first layers and iterative feature refinement in higher layers. This interplay, combined with their stabilizing effect on the gradient norms, enables them to train very deep networks. In this paper, we take a step further and introduce entangled residual mappings to generalize the structure of the residual connections and evaluate their role in iterative learning representations. An entangled residual mapping replaces the identity skip connections with specialized entangled mappings such as orthogonal, sparse, and structural correlation matrices that share key attributes (eigenvalues, structure, and Jacobian norm) with identity mappings. We show that while entangled mappings can preserve the iterative refinement of features across various deep models, they influence the representation learning process in convolutional networks differently than attention-based models and recurrent neural networks. In general, we find that for CNNs and Vision Transformers entangled sparse mapping can help generalization while orthogonal mappings hurt performance. For recurrent networks, orthogonal residual mappings form an inductive bias for time-variant sequences, which degrades accuracy on time-invariant tasks. \vspace{1em}
\end{abstract}

\section{Introduction}
\noindent The behavior of residual mappings in neural networks has been extensively studied. The common observation is that residual blocks enable both iterative refinement of features \citep{greff2016highway} and deep representation learning \citep{jastrzebski2018residual}. These schemes are enabled by preserving the magnitude of the gradients in the reverse-mode automatic differentiation  \citep{rumelhart1986learning,hochreiter1991untersuchungen} through identity mappings. For example, to learn long-term time dependencies by a neural network, the long short-term memory (LSTM) \citep{hochreiter1997long} deployed a constant error-propagation to preserve the magnitude of the gradients inside a deep unfolded recurrent neural networks (RNNs). 

With a similar principle, residual connections made the training of very deep neural networks possible \citep{greff2016highway,he2016deep}. A residual block, $R_f(x)$, provides two pathways for error propagation by using the following mechanism: $R_f(x) := f(x,\theta)+I_n x$. Here, $f$ is a trainable neural network parameterized by $\theta$ receiving inputs $x$, and $I_n$ is a non-trainable identity matrix to map the inputs through a second path to provide a clear error-propagation channel.

\begin{figure}[t]
\centering
	\includegraphics[scale=1]{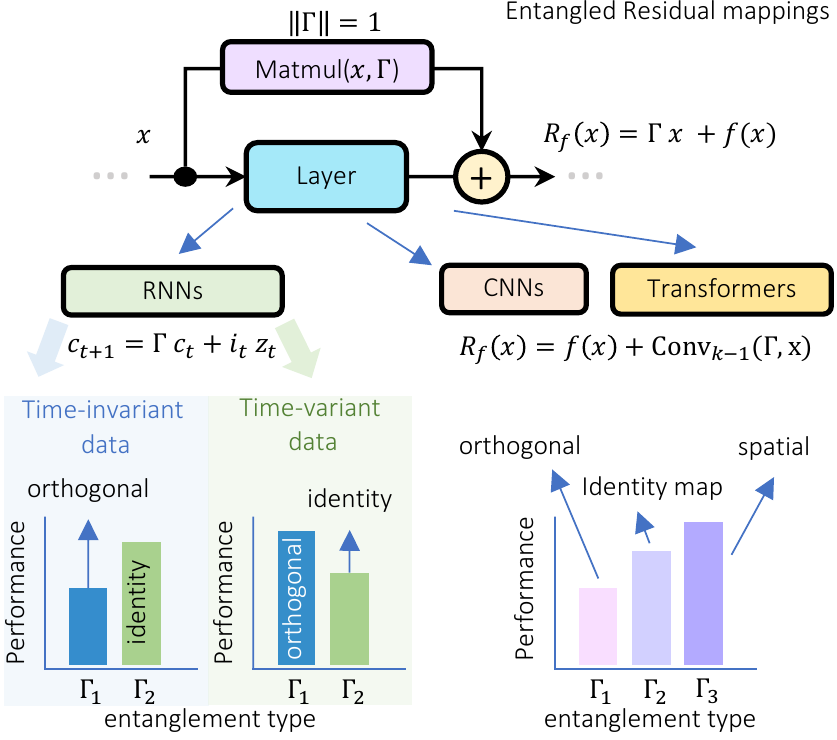}
	\caption{Different types of entangled residual mappings affect the generalizability of deep networks differently.}
	\label{fig:intro}
\end{figure}
Identity mappings in deep networks have been vastly studied, specially in vision-based classification tasks \citep{srivastava2015training,he2016deep,he2016identity,szegedy2017inception,xie2017aggregated}. Their primary purpose was to supply a neural network with a gateway for better gradient flow to facilitate learning \citep{he2016identity}. However, \citep{greff2016highway} further showed empirically that residual mapping in ResNets and Highway networks \citep{srivastava2015highway,zilly2017recurrent} in vision contexts do not learn new representations but instead tend to iteratively improve the representation under each residual block. The iterative refinement scheme further got confirmed as being a key reason for the great performance of residual networks \citep{jastrzebski2018residual,casanova2018iterative,zhang2019cascaded,guo2019dynamic}. In recurrent networks, unitary/orthogonal RNNs circumvented the problem of vanishing/exploding gradients by conditioning the hidden-to-hidden transition matrix's eigenvalues at 1, using unitary/orthogonal mappings \citep{arjovsky2016unitary,jing2017tunable,lezcano2019cheap,lechner2020learning}.

The common principle in these studies is the use of identity mappings as skip connection. In this paper, we set out to take a step further and ask whether a more general formulation of residual connections and transition matrices exist that yield the same or even better performance? Can we encode a prior on the learned features by adding structural biases to the residual/unitary mappings to improve performance while preserving the iterative feature refinement property? Are the vanilla identity mappings used in ResNets, for instance, already optimal? How do answers to these questions differ in vision-, recurrent-, and attention-based networks?

Here, we answer the questions raised above by building a parent family of identity mappings in deep networks and carefully studying their properties. This family is called \emph{entangled residual mappings} which generalizes the identity mapping $\text{id}: x \mapsto I_n x$, by replacing the identity matrix $I_n$ with specialized matrices $\Gamma$. Different types of entanglement matrices $\Gamma$ may share key attributes with identity matrices. Such attributes include conditions on eigenvalues, the map's structure and sparsity, and the Jacobian norm's magnitude. We study residual mappings' behavior at each of these fronts and report their impact on representation learning. We experiment comprehensively with various deep learning architectures ranging from advanced vision networks, RNNs, to models based on attention mechanisms such as Transformers \citep{vaswani2017attention}. Figure \ref{fig:intro} portraits our experimental setup, while Figure \ref{fig:resnets} takes an insightful look into its findings.

\begin{figure}[t]
    \centering
    \includegraphics[width=0.5\textwidth]{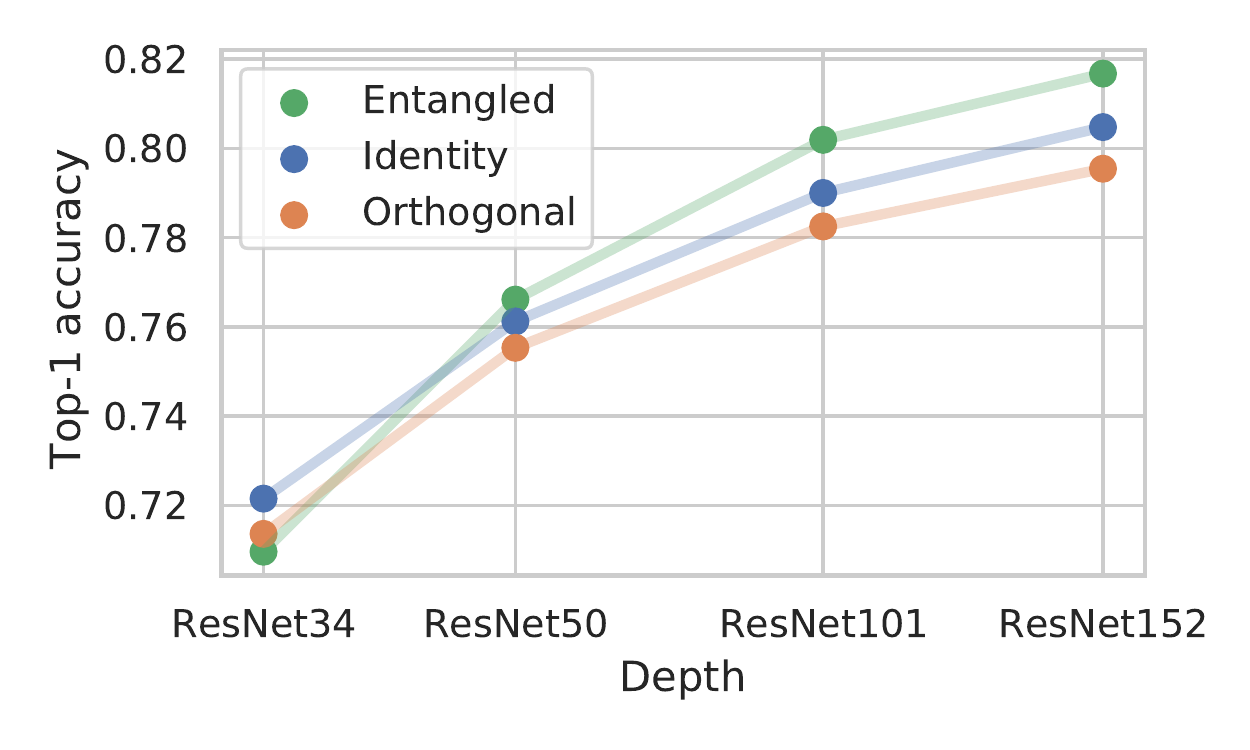}
    \caption{ImageNet top-1 accuracy of residual neural networks with various types of skip pathways. Orthogonal connections consistently reduces accuracy, while our entangled residual connections improve learning with increasing network depth.}
    \label{fig:resnets}
\end{figure}

\noindent\textbf{Summary of contributions}
\begin{itemize}
\itemsep-0.8mm
    \item Introducing entangled residual mappings as a framework to understand the role of residual mappings across various deep learning models and studying their eigenvalues, sparsity, and orthogonality, while preserving iterative inference properties under small entanglement coefficients
    \item Entangled mappings influence the representation learning process differently in convolutional networks, attention-based models, and recurrent neural networks.
    \item  For CNNs and Vision Transformers entangled sparse mappings help generalization while orthogonal mappings do not. For recurrent networks orthogonal residual mappings improve performance on time-variant tasks, and hurt performance on time-invariant tasks.
\end{itemize}

\section{Related works}
\noindent\textbf{Studies of Error Propagation}
\citep{hochreiter1991untersuchungen} and subsequently \citep{bengio1994learning} discovered that the magnitude of the error-flow in backpropagation-based learning schemes determines whether deep representations can be learned or not.
In particular, the long-short-term-memory (LSTM) \citep{hochreiter1997long} was the first architecture specifically designed to allow a near-constant magnitude error-backpropagation through its memory mechanisms. This enabled the RNNs to learn long-term dependencies in sequential data \citep{hochreiter1997long,gers2000learning,greff2016lstm,lechner2020gershgorin,hasani2019response,gu2021efficiently}.
The same principle has been adopted for learning feed-forward architectures under the term \textit{highway} networks \citep{srivastava2015training}. This mechanism made possible the training of deep networks with up to 1000 layers.
\citep{he2016deep} further utilized and simplified the idea in the form of residual neural networks (\textit{ResNets}) with skip connections to ease the flow of gradients through very deep networks. 

\noindent\textbf{Architectures with Residual connections.}
Since the development of ResNets \citep{he2016deep}, a large body of research has modified the micro-blocks' inner function in residual networks. This gave rise to a large family of models with skip connections \citep{szegedy2017inception,xie2017aggregated}.
In parallel, DenseNet \citep{huang2017densely} uses densely connected paths to interlink each micro-block with all the successive micro-blocks, to gain better degrees of expressivity with the cost of increasing complexity. 

In the context of recurrent models, \citep{soltani2016higher} introduced weighted skip connections between subsequent states of an unfolded RNN learn long-term dependencies. Inspired by this work, Dual Path Networks \citep{chen2017dual} proposed a combination of ResNets and DenseNet to enjoy the simultaneous use of feature recovery and novel feature generation of both models, for learning better representations. \citep{liao2016bridging} unifies the formulation of RNNs and ResNets to build a more performant generalization of the architectures with skip connections. 

More recent works \citep{chen2018neural,grathwohl2018ffjord,behrmann2019invertible,chen2019residual,massaroli2020dissecting} on invertible mappings and flows enabled the use of skip connections for classification, density estimation and generation. The primary focus of these works is on how to enhance the performance and expressivity of flows while reducing the computational complexity of the training pipeline, to enable scalability. In this work, in contrast, we design a framework to understand how skip connections can help or hurt generalization.

In the context of natural processing tasks, \citep{wang2022deepnet} has shown empirically that scaling the residual pathways of attention-based transformer networks is necessary for learning very deep transformer models. The authors have experimentally demonstrated that this scaling factor depends on the task and type of transformer block, i.e., encoder require a different factor than the decoder.
Instead of multiplying the residual connection by a scalar value, our works investigates structured, i.e., entangled, changes to the skip path. Consequently, our entangled connections study the effects on the learned representation instead of simply its magnitude.

\noindent\textbf{Purpose of Residual Connections}
Residual connections enabled the training of very deep neural networks \citep{he2016deep}, by preserving the gradients' magnitude during the backward pass. \citep{orhan2017skip} showed that these connections also eliminate singularities that cause difficulties during training deep models \citep{wei2008dynamics}. Moreover, skip connections were shown to make the system learn feature re-use rather than generating new representations \citep{greff2016highway,chen2017dual}. The family of continuous-depth models \citep{chen2018neural,hasani2021liquid} were proposed as the continuous analogous of ResNets, with computational advantages and realization of infinite-depth networks. However, It was recently shown that neural ordinary differential equations (ODEs) are indeed ResNets without ODE-interpretations \citep{ott2021resnet}. Amongst works on continuous models, the class of liquid networks \citep{hasani2021liquid,hasani2021closed} showed that an input-dependent variation of the identity mappings in the state-transition matrix of ODE-based networks can enhance generalization across many applications \citep{lechner2019designing,hasani2020natural,lechner2020neural,vorbach2021causal}. Here motivated by liquid networks we set out to explore the behavior of residual networks under change of their identity mappings

The work of \citep{balduzzi2017shattered} has observed that skip connections introduce correlations into the stochastic gradients. In particular, the authors show that the gradients of a standard deep network resembles white-noise which hurts the learning process. Contrarily, the gradients of residual networks have a more brown-noise characteristic. 

Other works \citep{yang2017mean,tarnowski2019dynamical} have identified that residual connections stabilize the geometry of the forward dynamics, whereas plain feed-forward networks suffer from a spectrum and geometry collapse with increasing depth.




\noindent \textbf{Unitary networks} do not require explicit memory and residual connections by constraining the architecture such that the Jacobian of each layer has its absolute eigenvalues at 1 \citep{arjovsky2016unitary}.
While this ensures a constant-magnitude error propagation, the process of enforcing the constraints can negatively impact the capacity and complexity of models \citep{wisdom2016full}.

\noindent\textbf{Orthogonal weight profiles} have been shown to overcome the vanishing/exploding gradient problem in RNNs~\citep{pmlr-v48-arjovsky16,lezcanocasado2019cheap,dorobantu2016dizzyrnn,pmlr-v70-vorontsov17a}. Orthogonal weights in CNNs improve the stability of the activations distributions in layers~\citep{2017regularizing}, derive faster and more stable convergence~\citep{NEURIPS2018_bf424cb7}, and facilitated back propagation by improving the spectrum of local Hessian~\citep{NEURIPS2018_b6d67a24}. Highlighting the importance of weight initialization and driven by the activation and gradient norm-preserving property of orthogonal weight matrices,~\citep{saxe2014exact,mishkin2016need,NEURIPS2019_e520f70a} demonstrated that orthogonal weight initialization helps in training very deep networks and improves their performances. Several works further propose orthogonal weight regularization to ensure that the weight matrix orthogonality is preserved during training. \citep{DBLP:journals/corr/HarandiF16,DBLP:journals/corr/OzayO16} proposed Stiefel manifold based constraint of weights, and~\citep{huang2017orthogonal} formulated this problem as Optimization over Multiple Dependent Stiefel Manifolds (OMDSM) to generalize the square orthogonal matrix to rectangular for feed forward networks. \citep{xie2017need} proposed orthogonal regularization by forcing the Gram matrix of the weight matrices to be close to identity under Frobenius norm. Utilizing analytical tools like mutual coherence and restricted isometry property,~\citep{bansal2018gain} demonstrated that orthogonality regularizations can achieve better accuracy and faster convergence. To ensure orthogonal convolutions, recently~\citep{Wang_2020_CVPR} proposed an approach  based on the doubly block-Toeplitz (DBT) matrix representation of the convolutional kernel called Orthogonal CNNs (OCNNs). \citep{orhan2017skip} investigated orthogonality in skip connections and showed that networks with random dense orthogonal skip connections can achieve the same performance of networks with identity skip connections.

\section{Main Results}
\noindent In this section, we construct the formal framework within which we explore the role of residual connections. We first introduce a general notion for residual mapping, namely the entangled residual mapping. We then explain how properties such as eigenvalues, sparsity, and orthogonality are key to explain the impact of entangled residual mappings. We delve deeper and formalize entangled mappings in networks with recurrent nature, networks built by convolutional layers, and networks with attention mechanisms (Transformers).

\noindent\textbf{Introducing Entangled Residual Mappings.} Let $f$ be an arbitrary block of neural network layers, then:
\begin{equation}\label{eq:res}
	R_f(x) := f(x)+x,
\end{equation}
is the residual wrapping of $f$ \citep{he2016deep}. 
The above equation is a special representation of a network combining two pathways: a learnable path $f$ and a fixed connection that multiplies the input with the identity matrix. In particular, Eq~\ref{eq:res} can be rewritten as:$R_f(x) := f(x)+I_n x$,
where $n$ is the dimension of $x$ and $I_n$ the $n\times n$ identity matrix. The Jacobian matrix of this transformation is: $J = \frac{\partial   R_f}{\partial x} \ = \
 \frac{\partial f(x)}{\partial x} \ + \ I_n$.
The motivation behind the fixed pathway is to allow efficient transport of information forward and backward during the training phase \citep{greff2016highway,he2016identity}.

However, the term information in the above motivation is ambiguous, as information propagates in these connections as activations in the forward pass and as errors in the backward flow.
Moreover, the same objective could be achieved by replacing the identity matrix with an arbitrary matrix that shares certain characteristics with the identity matrix.
For instance, literature has studies matrices with absolute eigenvalues 1 \citep{hochreiter2001gradient}, as well as orthogonal and unitary matrices \citep{arjovsky2016unitary}.

We hypothesize that the identity matrix poses a prior on the learned features and that by changing this prior, better architectures can be designed.

To test our hypothesis, we propose an \textit{entangled} variant of a residual block by
\begin{equation}\label{eq:entangled}
	R_f(x) := f(x)+\Gamma x,
\end{equation}
where $\Gamma$ is a constant $n\times n$ matrix that satisfies certain properties.
For instance, in the case of the identity matrix $I_n$, the matrix is symmetric, orthogonal, sparse \citep{liebenwein2021sparse}, and has all eigenvalues at 1.
The Jacobian matrix of the entangled residual connection is given by:
\begin{align}
J = \frac{\partial   R_f}{\partial x} \ = \
\frac{\partial f(x)}{\partial x} \ + \  \Gamma^\text{T} .
\end{align}

We empirically study different families for $\Gamma$ that share only a subset of these properties and evaluate the effects on the performance and learned features.

\noindent \textbf{Eigenvalues, Sparsity and Orthogonality.}
We focus our study on three fundamental properties of the identity matrix; I) all eigenvalues are equal to 1, II) $I_n$ is sparse, and III) $I_n$ is an orthogonal matrix. 

For studying sparsity and the importance of eigenvalues, we define an entanglement matrix that is parameterized by a single scalar variable as
\begin{equation}\label{eq:gamma}
\Gamma := \frac{1}{n}\mathbf{1}_n\gamma + (1-\gamma)I_n,
\end{equation}
where $\mathbf{1}_n$ is the matrix with 1s in each entry and $0\leq \gamma \leq 1$ a hyperparameter.
Sweeping the hyperparameter $\gamma$ results in a spectrum between the standard residual connection ($\gamma=0$) and a fully-entangled connection where the signal $x$ is distributed across all dimensions of $x$ ($\gamma=1$).
Our parametrization of $\Gamma$ has one eigenvalue at 1 and $n-1$ eigenvalues at $(1-\gamma)$ and all singular values strictly less than 1 .

For studying orthogonally entangled residual connections, we set $\Gamma$ to be the orthogonal matrices obtained by the QR-decomposition of randomly generated matrices. Note that naturally all singular values of such orthogonal $\Gamma$ are 1 and all complex eigenvalues have norm 1.

Entangled residual mappings enable us to investigate thoroughly how various deep learning architectures gain benefit from residual connections. Next, we define them in the context of recurrent, convolutional, and attention networks.

\subsection{Entangled Memory RNNs}
\noindent Identity mappings first made their breakthrough in recurrent neural networks. In particular, the LSTM architecture \citep{hochreiter1997long} first propose the state-next-state transition
\begin{equation}\label{eq:lstm}
	c_{t+1} = c_t + i_t\cdot z_t,
\end{equation}
where $c_t$ is the RNN state, $i_t$ and $z_t$ a input gate and activation respectively. Note that the architecture has been later extended by a forget gate that element-wise multiplies with the state $c_t$ in Eq.~\ref{eq:lstm} and improves the learning performance \citep{gers2000learning,greff2016lstm}. Similar to the entangled residual connections, we modify the state-update of the LSTM architecture by replacing the identity mapping by a multiplication with an entanglement matrix, i.e., 
\begin{equation}\label{eq:entangled_lstm}
c_{t+1} = c_t \Gamma + i_t\cdot z_t.
\end{equation}
Note that the LSTM has a second feedback connection through its output gate, thus even if we set $\Gamma$ to all zeros, the architecture is still a recurrent neural network.

\subsection{Entangled Vision and Attention Mappings}

\noindent The hidden units $x$ of contemporary computer vision and attention-based models are not represented by vectors but higher dimensional tensors. For instance, the hidden units $x$ of a 2D convolutional network are rank-3 feature maps, whereas the Transformer architecture \citep{vaswani2017attention} represent its $x$ as rank-2 feature sequences. 

As a result, the entanglement principle given in Eq~\ref{eq:entangled} is not directly applicable to these network architectures. Instead, we propose an entangled residual connection that works on general rank-$k$ tensor objects $x$, by using a $(k-1)$-rank convolution operation, i.e.,
\begin{equation}\label{eq:conv}
	R_f(x) := f(x)+\text{Conv}_{k-1}(\Gamma, x), 
\end{equation}
where $\Gamma$ is a $(k+1)$-ranked kernel tensor. For example, the hidden state of a 2D CNN has rank 3 (width, height, channels) and a corresponding 2D convolution entanglement kernel has rank 4 (weight, height, input channels, output channels). 
Code examples on how to initalize the entanglement kernels for a 2D convolutional neural network is shown in Figure \ref{fig:code}.

\definecolor{LightGray}{gray}{0.9}
\begin{figure}
    \centering
    \begin{minted}[bgcolor=LightGray]{python}
import numpy as np
num_channels = # number of channels
gamme = # Entanglement factor
kernel_size = # Spatial entanglement size
# Orthogonal entanglement
A = np.random.normal(size=(num_channels, num_channels))
orth_kernel, r = np.linalg.qr(A)
orth_kernel = orth_kernel.reshape((1, 1, num_channels, num_channels))
# Spatial entanglment
sp_kernel = np.zeros((kernel_size,kernel_size, num_channels, num_channels))
for i in range(num_channels):
    sp_kernel[:, :, i, i] += gamma / np.square(kernel_size)
    sp_kernel[kernel_size // 2, kernel_size // 2, i, i] +=  1.0 - gamma
# Channel-wise (and channel+spatial) entanglement
ch_kernel = np.ones((kernel_size, kernel_size, num_channels, num_channels)) \
    * gamma / (np.square(kernel_size) * num_channels)
for i in range(num_channels):
    ch_kernel[kernel_size // 2, kernel_size // 2, i, i] +=  1.0 - gamma          
\end{minted}
    \caption{Python code listing for the creation of the different 2D entanglement kernels.}
    \label{fig:code}
\end{figure}

The sparsity and shape of $\Gamma$ determine which dimensions of $x$ get entangled, whereas the non-zero entries define how the dimensions are entangled. Figure \ref{fig:kernels} shows how different dimensions of vision networks can be entangled by various sparse kernels $\Gamma$.
Standard residual connections can be expressed by Eq~(\ref{eq:conv}) as the $\bigotimes_{k-1} 1$ convolution with the identity matrix, e.g. a $1\times 1$ convolution.

\begin{figure}[t]
    \centering
    \includegraphics[width=1.0\textwidth]{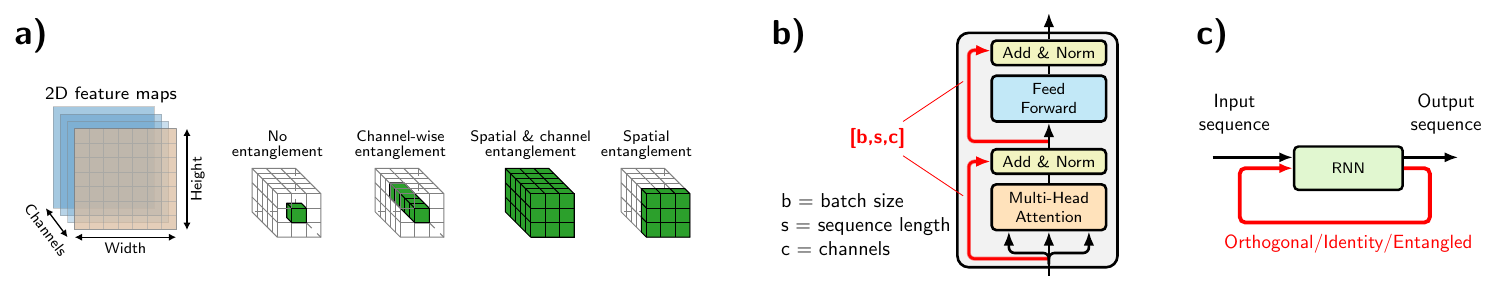}
    \caption{Schematic illustration of various entangled network architectures. \textbf{a)} Different types of entanglement applied to a unit in the first feature channel (orange) of a 2D CNN. \textbf{b)} Entanglement applied to a transformer encoder block. \textbf{c)} Recurrent neural network with orthogonal, identity, or entangled feedback connections. }
    \label{fig:kernels}
\end{figure}

In transformer-encoder blocks \citep{vaswani2017attention} there exist two residual connections. The first one bypasses the multi-head self-attention module, while the second residual pathway bypasses the feed-forward layer that follows the attention module. Each of the two residual layers is followed immediately by a normalization layer, i.e., both modules of the encoder are of the form
\begin{equation}
    \text{LayerNorm}(x+\text{Sublayer}(x)),
\end{equation}
where Sublayer is a multi-head self-attention or a feed-forward network  respectively.
Note that the feed-forward module operates position-wise, i.e., for each sequence item separately \citep{vaswani2017attention}, thus corresponding to a 1D-convolution with a kernel of length 1 and stride 1. As the inputs $x$ of both parts are represented by feature sequences, we apply a 1D-convolutional entanglement as introduced in Equation (\ref{eq:conv}) 
Using the entangled framework, we can comprehensively study the effect of residual connections in transformers in various experimental settings. 

\subsection{Entangled maps preserve iterative refinement}

\noindent Here, we show how entangled residual maps do not affect the iterative refinement of features in residual networks. Iterative inference in ResNets was formalized by \citep{jastrzebski2018residual} as follows: Given a residual block $i$ within a network with $M$ blocks, the transformation of a feature $\textbf{x}_i$, can be formulated by: $\textbf{x}_{i+1} = \textbf{x}_i + f_i(\textbf{x}_i)$.
We can apply a loss function, $\mathcal{L}$, recursively on $M$ \citep{jastrzebski2018residual}:

\begin{equation}
     \mathcal{L}(\textbf{x}_{M}) = \mathcal{L}(\textbf{x}_{M-1} + f_{M-1}(\textbf{x}_{M-1})).
\end{equation}

By computing a first-order Taylor expansion of this loss expression and absorbing higher orders in $\mathcal{O}(.)$, \citep{jastrzebski2018residual} showed that a good approximation can be achieved when $f_{j}$'s magnitude is small in the following expression:

\begin{equation}
    \mathcal{L}(\textbf{x}_{M}) = \mathcal{L}(\textbf{x}_{i}) + \sum^{M-1}_{j=i} f_{j}(\textbf{x}_{j}). \frac{\partial \mathcal{L}(\textbf{x}_{j})}{\textbf{x}_{j}} + \mathcal{O}(f^2_{j}(\textbf{x}_{j})).
    \label{eq:loss_extention}
\end{equation}

\noindent Eq. \ref{eq:loss_extention} implies that when minimizing the loss, $\textbf{x}_i + f_i(\textbf{x}_i)$ moves feature $\textbf{x}_{i}$, with approximately the same amount ($f_{i}(\textbf{x}_{i})$) as that of $\frac{\partial \mathcal{L}(\textbf{x}_{i})}{\textbf{x}_{i}}$ in the opposite direction \citep{jastrzebski2018residual}. This leads to reduction in loss gradually as we go from features $\textbf{x}_{i}$ to $\textbf{x}_{M}$, establishing the \emph{iterative refinement} \citep{greff2016highway,jastrzebski2018residual}. More formally, this means that if a residual block is performing iterative refinement then $x_{i+1} \approx x_i$, which is equivalent to $\frac{\norm{f_{i}(\textbf{x}_{i})}_{2}}{\norm{\textbf{x}_{i}}_{2}}$ being small.

\begin{lemma}\label{lemma:entangle}
Entangled maps with $\gamma$ closer to 0 presented by Eq. \ref{eq:entangled} preserve the iterative feature refinement property of residual blocks that have a small $\frac{\norm{f_{i}(\textbf{x}_{i})}_{2}}{\norm{\textbf{x}_{i}}_{2}}$ ratio. This is while larger values of $\gamma$ may hurt their iterative refinement.
\end{lemma}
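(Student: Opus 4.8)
The plan is to track how a single entangled block displaces its input and to show that displacement is governed by $\gamma$ on top of the usual residual ratio. Applying Eq.~\ref{eq:entangled} block-wise gives $\textbf{x}_{i+1} = \Gamma\textbf{x}_i + f_i(\textbf{x}_i)$, hence $\textbf{x}_{i+1} - \textbf{x}_i = (\Gamma - I_n)\textbf{x}_i + f_i(\textbf{x}_i)$. The first step is to make $\Gamma - I_n$ explicit: substituting Eq.~\ref{eq:gamma} yields $\Gamma - I_n = \gamma\big(\frac{1}{n}\mathbf{1}_n - I_n\big)$. Since $\frac{1}{n}\mathbf{1}_n$ is the orthogonal projector onto $\mathrm{span}(\mathbf{1}_n)$, the matrix $\frac{1}{n}\mathbf{1}_n - I_n$ is symmetric with eigenvalue $0$ on $\mathbf{1}_n$ and $-1$ on its orthogonal complement, so $\norm{\Gamma - I_n}_2 = \gamma$ (matching the $n-1$ eigenvalues of $\Gamma$ at $1-\gamma$ noted after Eq.~\ref{eq:gamma}).

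A triangle inequality then gives the upper bound $\norm{\textbf{x}_{i+1} - \textbf{x}_i}_2 \le \gamma\norm{\textbf{x}_i}_2 + \norm{f_i(\textbf{x}_i)}_2$, i.e. $\frac{\norm{\textbf{x}_{i+1} - \textbf{x}_i}_2}{\norm{\textbf{x}_i}_2} \le \gamma + \frac{\norm{f_i(\textbf{x}_i)}_2}{\norm{\textbf{x}_i}_2}$, so a block with small residual ratio and $\gamma$ close to $0$ still satisfies $\textbf{x}_{i+1}\approx\textbf{x}_i$. To lift this to the loss-level statement of Eq.~\ref{eq:loss_extention}, I would introduce the effective residual $g_j(\textbf{x}_j) := (\Gamma - I_n)\textbf{x}_j + f_j(\textbf{x}_j)$, so that $\textbf{x}_{j+1} = \textbf{x}_j + g_j(\textbf{x}_j)$ has the exact form treated by \citep{jastrzebski2018residual}; rerunning their Taylor expansion verbatim with $f_j$ replaced by $g_j$ shows the remainder $\mathcal{O}(g_j^2)$ is negligible precisely when $\norm{g_j(\textbf{x}_j)}_2/\norm{\textbf{x}_j}_2$ is small, which the displayed bound guarantees whenever $\gamma$ is small.

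For the second assertion I would show the bound is essentially tight. Choosing any $\textbf{x}_i$ in the orthogonal complement of $\mathbf{1}_n$ (the generic case for a feature map) gives $(\Gamma - I_n)\textbf{x}_i = -\gamma\textbf{x}_i$, and the reverse triangle inequality yields $\norm{\textbf{x}_{i+1} - \textbf{x}_i}_2 \ge \gamma\norm{\textbf{x}_i}_2 - \norm{f_i(\textbf{x}_i)}_2$, so when the residual ratio is small the relative displacement is of order $\gamma$ and $\textbf{x}_{i+1}\approx\textbf{x}_i$ fails as soon as $\gamma$ is not small. I expect the only real subtlety to be exactly this: $\Gamma$ acts as the identity on the $\mathbf{1}_n$ direction, so the clean equality $\norm{\Gamma - I_n}_2 = \gamma$ is attained only off that direction, and the ``may hurt'' half of the claim must therefore be stated as the existence of inputs on which iterative refinement is broken, rather than as a statement about every input. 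Everything else reduces to the one-line norm estimates above.
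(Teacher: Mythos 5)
Your proof is correct, and it takes a genuinely different route from the paper's. The paper never forms the displacement $\textbf{x}_{i+1}-\textbf{x}_i$; instead it redefines the refinement measure for the entangled block as $\frac{\norm{f_{i}(\textbf{x}_{i})}^{2}_{2}}{\norm{\Gamma \textbf{x}_{i}}^{2}_{2}}$ and sandwiches it using the extreme singular values of $\Gamma$: the bound $\norm{\Gamma\textbf{x}_i}\leq\norm{\Gamma}\norm{\textbf{x}_i}$ with $\norm{\Gamma}=1$ gives a lower bound equal to the original ratio, and $\norm{\Gamma\textbf{x}_i}\geq(1-\gamma)\norm{\textbf{x}_i}$ gives the upper bound $\frac{\norm{f_{i}(\textbf{x}_{i})}^{2}_{2}}{(1-\gamma)^{2}\norm{\textbf{x}_{i}}^{2}_{2}}$, which stays close to the baseline ratio for small $\gamma$ and degrades as $\gamma\to 1$. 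You instead control $\norm{\textbf{x}_{i+1}-\textbf{x}_i}_2$ directly via $\norm{\Gamma-I_n}_2=\gamma$, which is more faithful to the criterion $\textbf{x}_{i+1}\approx\textbf{x}_i$ stated just before the lemma and has a real advantage: it exposes the $\gamma\norm{\textbf{x}_i}_2$ term by which the skip path itself moves the feature, a contribution the paper's ratio cannot see (that ratio is small whenever $f_i$ is small, for any $\gamma<1$). Your lower bound on the complement of $\mathrm{span}(\mathbf{1}_n)$ also turns the ``may hurt'' half into an explicit existence statement, where the paper only argues that its upper bound grows. What the paper's route buys is a statement phrased purely in terms of its own empirical measure $\frac{\norm{f_{i}(\textbf{x}_{i})}_{2}}{\norm{\textbf{x}_{i}}_{2}}$, so it plugs directly into the preceding discussion of Eq.~\ref{eq:loss_extention} without introducing the effective residual $g_j$; your $g_j$ device is the cleaner way to actually rerun the Taylor argument, and is the part of your write-up the paper has no analogue of.
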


\begin{proof}
In a standard residual network some blocks perform iterative feature refinement when the following empirical measure of refinement, $\frac{\norm{f_{i}(\textbf{x}_{i})}_{2}}{\norm{\textbf{x}_{i}}_{2}}$, is small, in which based on Eq. \ref{eq:loss_extention}, $f_{i}(\textbf{x}_{i})$ is the refinement step. Now given the representation of an entangled residual mapping as $f_{i}(\textbf{x}_{i})+\Gamma \textbf{x}_{i}$, with $\norm{\Gamma} = 1$, its refinement step will be given by $\frac{\norm{f_{i}(\textbf{x}_{i})}^{2}_{2}}{\norm{\Gamma \textbf{x}_{i}}^{2}_{2}}$. The following inequality $\norm{\Gamma \textbf{x}_{i}} \leq \norm{\Gamma} \norm{ \textbf{x}_{i}}$ provides a lower bound for the entangled iterative refinement as follows: $\frac{\norm{f_{i}(\textbf{x}_{i})}^{2}_{2}}{\norm{\Gamma}^{2}_{2} \norm{\textbf{x}_{i}}^{2}_{2}} \leq \frac{\norm{f_{i}(\textbf{x}_{i})}^{2}_{2}}{\norm{\Gamma \textbf{x}_{i}}^{2}_{2}}$. We also know based on the definition of $\Gamma$ that it has one eigenvalue at 1 and $n-1$ eigenvalues at $1-\gamma$. Given that $\Gamma$'s minimum eigenvalue is $1-\gamma$ the inequality $ (1-\gamma) \norm{ \textbf{x}_{i}} \leq \norm{\Gamma \textbf{x}_{i}}$ holds. This gives an upper bound for the entangled iterative refinement step as follows:
 $\frac{\norm{f_{i}(\textbf{x}_{i})}^{2}_{2}}{\norm{\Gamma \textbf{x}_{i}}^{2}_{2}} \leq  \frac{\norm{f_{i}(\textbf{x}_{i})}^{2}_{2}}{(1-\gamma)^2 \norm{\textbf{x}_{i}}^{2}_{2}}$. 
 The upper bound shows that for the residual blocks that perform iterative refinement (i.e., has a small $\frac{\norm{f_{i}(\textbf{x}_{i})}_{2}}{\norm{\textbf{x}_{i}}_{2}}$ ratio), entangled maps with $\gamma$ closer to 0 preserves the ratio of the original refinement step. This is while $\gamma$ increases the refinement ratio becomes larger thus the iterative refinement property of the block may not hold.
\end{proof}

The statement of this Lemma has far-reaching implications in practice. In all experiments we observe that a small amount of entanglement ($\gamma = 0.1$) leads to a better generalization while a larger entanglement coefficient $(\gamma = 0.9)$ hurts the performance consistently. Based on Lemma 1, we can conclude that disrupting the iterative refinement of features (choosing a larger $\gamma$, and correspondingly having a larger refinement step) negatively influences residual network's generalization. 

\section{Experiments}
In this section, we conduct a large scale study on how entangled residual connections affect the predictive performance of deep models. In particular, we experiment with advanced vision networks, recurrent networks, and Transformers. 

\begin{table}[t]
\centering
\caption{Top-1 accuracy on ILSVRC 2012 val. set \citep{ILSVRC15} after 90 training epochs. Parenthesis = deviation from baseline model. bold values = best performance.}
\label{tab:imagenet}
\vspace{0mm}
\begin{adjustbox}{width=0.47\columnwidth}
\begin{tabular}{lc}
\toprule
Architecture & Validation accuracy \\
\midrule
ResNet50-v2 baseline & 76.12\% (0.00\%)  \\
\midrule
\multicolumn{2}{l}{Channel-wise entanglement}  \\
$\gamma$ = 0.1 & 76.09\%\  \textcolor{badcolor}{(-0.03\%)} \\
$\gamma$ = 0.5 &  75.63\%\  \textcolor{badcolor}{(-0.49\%)} \\
$\gamma$ = 1.0 & 72.57\%\  \textcolor{badcolor}{(-3.55\%)}\\
\midrule
\multicolumn{2}{l}{Spatial and channel-wise entanglement} \\
$\gamma$ = 0.1 &  76.08\%\  \textcolor{badcolor}{(-0.04\%)} \\
$\gamma$ = 0.5 &  75.69\%\  \textcolor{badcolor}{(-0.43\%)} \\
$\gamma$ = 1.0 & 72.72\%\  \textcolor{badcolor}{(-3.40\%)}\\
\midrule
\multicolumn{2}{l}{Spatial only entanglement} \\
$\gamma$ = 0.1 &  76.31\%\  \textcolor{goodcolor}{(+0.19\%)}\\
$\gamma$ = 0.5 & 75.45\%\  \textcolor{badcolor}{(-0.67\%)}\\
$\gamma$ = 1.0 &  74.65\%\  \textcolor{badcolor}{(-1.47\%)}\\
\midrule
Orthogonal channels &  75.53\%\  \textcolor{badcolor}{(-0.59\%)}\\
\midrule
Entanglement+ &  \textbf{76.69}\% \textcolor{goodcolor}{(+0.57\%)}\\
\midrule
No residual & 72.47\%\ \textcolor{badcolor}{(-3.65\%)} \\
\bottomrule
\end{tabular}
\end{adjustbox}
\end{table}

\textit{Entanglement as a hyperparamter optimization problem.} In all experiments, we tune the hyperparameter once for the non-entangled baseline and then transfer them to the entangled versions of the networks due to computational resource limitations. However, this practice raises the question on whether the baseline's optimal hyperparameters are optimal for the entangled networks as well. 
Moreover, entangled residual connections introduce their own set of hyperparameters such as the interpolation factor $\gamma$. Therefore, we run an additional experiment, where we run a 24-hour search on the entanglement and other hyperparameters. We call these networks \textbf{Entanglement+}. Due to computational limitations, we restrict this experiment to the computer vision benchmarks. Hyperparameters are given in the supplements.

\subsection{Vision Networks}
\noindent \textit{Baselines.} We selected well-studied ResNet50-v2 \citep{he2016identity} and wide-residual \citep{zagoruyko2016wide} for evaluating the properties and influence of entangled maps.

\noindent \textit{Datasets.} We benchmark networks on CIFAR-10, CIFAR-100 \citep{krizhevsky2009learning}, and ILSVRC-2012 (referred to as ImageNet) \citep{ILSVRC15}.

\noindent \textit{Experimental Setup ImageNet.} ImageNet represents an image classification task composed of 1.28 million publicly available training and 50,000 validation images corresponding to 1000 possible object categories.
Before training, we reprocess each image by normalizing each color-channel to zero mean and unit standard deviation over the entire training dataset. We train the entangled ResNet50-v2 models on the ImageNet dataset for 90 epochs with a batch size of 512. We linearly warm-up the learning rate from $10^{-3}$ to 0.5 in the first 5 epochs, followed by a linear-cosine decay \citep{bello2017neural} to $10^{-4}$ in the remaining 85 epochs. 

\begin{table}[t]
\centering
	\caption{Classification acc. on CIFAR-10 and CIFAR-100 test set \citep{krizhevsky2009learning}. Best values in bold. n=3}
	\label{tab:cifar}
\begin{adjustbox}{width=0.57\columnwidth}
			\begin{tabular}{lcc}
				\toprule
				Architecture & CIFAR-10 & CIFAR-100 \\
				\midrule
				Baseline WRN-28-10 & 96.37\% $\pm$ 0.07 & 80.56\% $\pm$ 0.16 \\
				\midrule
				\multicolumn{3}{l}{Channel-wise entanglement}  \\
				$\gamma$ = 0.1 &  96.32\% $\pm$ 0.14  & 80.59\% $\pm$ 0.28  \\
				$\gamma$ = 0.5 &  96.37\% $\pm$ 0.05  & 78.97\% $\pm$ 0.44 \\
				$\gamma$ = 1.0 &  95.00\% $\pm$ 0.75  & 75.89\% $\pm$ 0.87 \\
				\midrule
				\multicolumn{3}{l}{Spatial and channel-wise entanglement} \\
				$\gamma$ = 0.1 &  96.43\% $\pm$ 0.05  & 80.78\% $\pm$ 0.11  \\
				$\gamma$ = 0.5 &  96.33\% $\pm$ 0.13  & 79.08\% $\pm$ 0.17  \\
				$\gamma$ = 1.0 &  95.86\% $\pm$ 0.10  & 77.39\% $\pm$ 0.11 \\
				\midrule
				\multicolumn{3}{l}{Spatial only entanglement} \\
				$\gamma$ = 0.1 & 96.49\% $\pm$ 0.10 &  80.82\% $\pm$ 0.16  \\
				$\gamma$ = 0.5 & 96.41\% $\pm$ 0.13  & 79.70\% $\pm$ 0.12  \\
				$\gamma$ = 1.0 & 95.96\% $\pm$ 0.08  & 79.14\% $\pm$ 0.26 \\
				\midrule
                Orthogonal channels & 96.27\% $\pm$ 0.14 & 80.33\% $\pm$ 0.15 \\
				\midrule
				No residual &  95.73\% $\pm$ 0.23  & 77.55\% $\pm$ 0.17  \\\midrule
				Entanglement+ & \textbf{96.72\% $\pm$ 0.14} & \textbf{81.27\% $\pm$ 0.26} \\
				\midrule\bottomrule
				Kernel orth. \citep{xie2017need} & - & 79.3\% $\pm$ N/A  \\
				OCNN \citep{Wang_2020_CVPR} & - & 80.1\% $\pm$ N/A\\
				\bottomrule
			\end{tabular}
		\end{adjustbox}
\end{table}
For scaling the SGD updates, we use a Nesterov momentum of 0.875 \citep{nesterov27method}. We apply standard random cropping of $224\times 224$ segments, random aspect ratio distortions (3/4 to 4/3), and random horizontal flips for data augmentation. We add a 1/32768 L2 weight decay to the Conv layers' kernels and perform label smoothing with a factor of 0.1 \citep{szegedy2016rethinking}.

\noindent \textit{Experimental Setup CIFAR10 and 100.} Each sample is a 32-by-32 RGB image corresponding to one out of 10 (CIFAR-10) or 100 (CIFAR-100) categories.
Before training, we reprocess each image by normalizing each color-channel to zero mean and unit standard deviation over the entire training dataset. 
We train our entangled WRN-28-10 models on the CIFAR-10 and CIFAR-100 dataset for 200 epochs with a batch size 256. 
We use a warm-up epoch with a learning rate of 0.005.
For the remaining epochs, we use a learning rate of 0.1, which we decay by a factor of 0.2 every 60 epochs. 
For scaling the gradient steps, we use a Nesterov-momentum of 0.9 \citep{nesterov27method}. We apply cutout as a data augmentation strategy \citep{devries2017improved} and add an L2 weight decay factor of 0.0005 to the Conv layers' kernels. We repeat each experiment 3 times and report the mean and std.

\noindent \textbf{Discussing the Entangled Vision Networks}
The results are shown in Table \ref{tab:imagenet} and Table \ref{tab:cifar} for ImageNet and CIFAR, respectively. We make three key observations:
Having full ($\gamma=1$) channel-wise or channel+spatial entangled residual connections is as bad or even worse than having no skip connections. We hypothesize that a channel-wise entanglement reduces the learned features' diversity as the residual blocks are biased toward combining different feature channels. To further test our hypothesis by performing a feature visualization in the following.  
Small spatial entanglement ($\gamma=0.1$) consistently improves the performance across all benchmarks. Networks with orthogonal maps perform worse than identity maps. We further elaborate on this below.

We perform a qualitative analysis to investigate how an entanglement affects the learned features. To do this, we leverage existing methods developed for feature visualization and attribution mapping. Note that due to the training process's stochastic nature, the learned features of networks are not directly comparable. To overcome this limitation and have a direct comparison, we load the baseline model weights but use the entangled architectures (with $\gamma=0.1$). Consequently, we visualize how entanglement changes the features learned by the standard model. Ideally, as we use a small $\gamma$, we would want the entangled mapping to preserve the base network's features to a large extend.


\begin{figure}[t]
\centering
\includegraphics[width=0.6\textwidth]{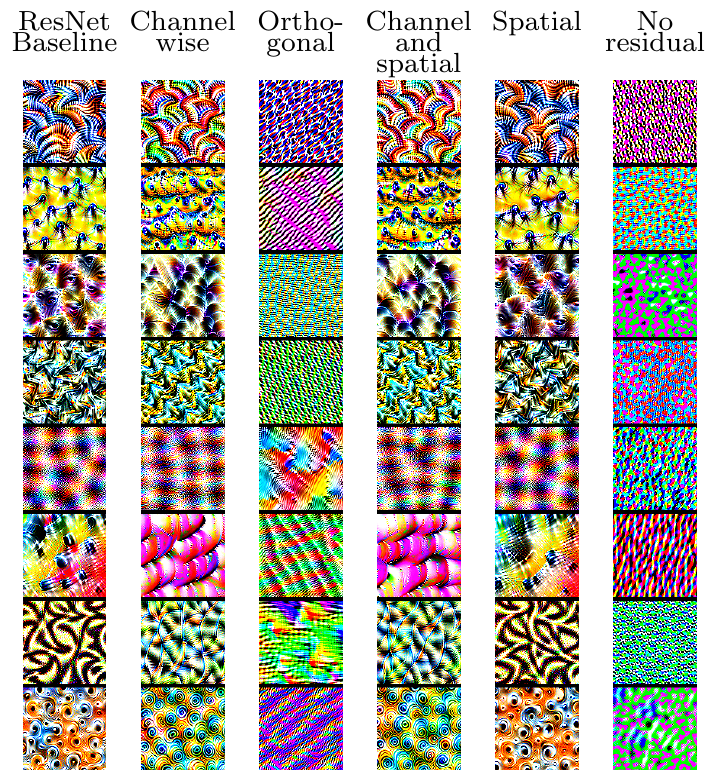}
\caption{Feature visualization of the first eight channels after 6-th residual block. Due to entanglement, the channels of the channel-wise entangled network have learned similar features, explaining partially its poor accuracy.
}
\label{fig:features}
\end{figure}

\noindent \textbf{Feature visualization}
Fig. \ref{fig:features} visualizes the first eight feature channels after the 6-th residual block by maximizing the mean activation over the entire channel \citep{erhan2009visualizing}. We use 1000 iterations of Adam \citep{kingma2014adam} (step-size 0.05) for the optimization and apply a Fourier parametrization \citep{olah2017feature}, jittering \citep{mordvintsev2015inceptionism}, and color-space decorrelation \citep{olah2017feature} to regularize the visualization. 

We observe that removing residual connections entirely results in a poor filter representation, which shows that res-nets encode their features completely different than pure sequential models. We make the same conclusion for orthogonally entangled network. 

\begin{figure}[t]
\centering
\includegraphics[width=0.5\textwidth]{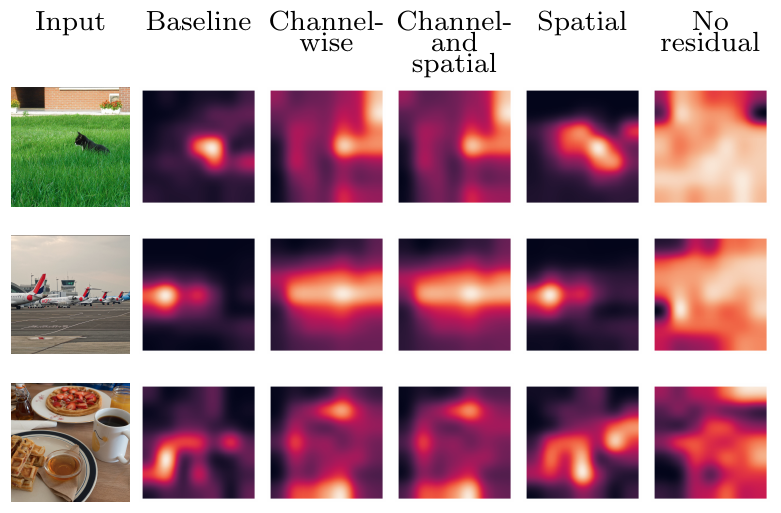}
\caption{Saliency maps of the entangled ResNets trained on ImageNet. 
}
\label{fig:saliency}
\end{figure}
Spatial entanglement best preserves the learned features' patterns among entangled variants. It only modifies the pattern's location and rotation. Channel-wise entanglement alters the representation of some features compared to the baseline, which suggests that it has a much stronger effect on the feature encoding compared to a spatial mode.


\noindent \textbf{Saliency maps.}
Figure \ref{fig:saliency} visualizes the attention (saliency) maps of networks via Grad-CAM \citep{selvaraju2017grad}. Like in our feature visualization, we use entangled networks, with $\gamma=0.1$, and load the baseline weights to compare more directly. Our observations on the attention maps are aligned with what we learned from feature visualization. 
We notice that the network's attention is entirely scattered if the residual connections are not present.
We also notice that the saliency maps with spatial entanglement preserve the correlations required for reasoning in these examples, demonstrating that it learns similar weight representations as the baseline network. On the other hand, channel-wise entanglement reduces the clarity of the attention maps, which suggests strong alternation of networks' weight representation.



\begin{table}[t]
\centering
\caption{Top-1 accuracy on the ILSVRC 2012 validation set \citep{ILSVRC15} after 300 training epochs. Parenthesis = deviation from baseline model. bold = best performance.}
\label{tab:vit}
\begin{adjustbox}{width=0.45\columnwidth}
\begin{tabular}{lc}
\toprule
Architecture & Validation accuracy \\
\midrule
Vision transformer (ViT-S) & 72.75\%  \\
\midrule
\multicolumn{2}{l}{Channel-wise entanglement}  \\
$\gamma$ = 0.1 & \textbf{73.14}\% \\
$\gamma$ = 1.0 & 0.1\% \\
\midrule
\multicolumn{2}{l}{Spatial and channel-wise entanglement} \\
$\gamma$ = 0.1 &  \textbf{73.39}\%  \\
$\gamma$ = 1.0 & 0.1\% \\
\midrule
\multicolumn{2}{l}{Spatial only entanglement} \\
$\gamma$ = 0.1 & 71.86\% \\
$\gamma$ = 1.0 & 57.29\% \\
\midrule
Orthogonal channels & 72.49\% \\
\bottomrule
\end{tabular}
\end{adjustbox}
\end{table}
\subsection{Vision Transformers}
Here, we test whether entangled mappings have the same effect on vision transformers \citep{dosovitskiy2020image}. A vision transformer (ViT) tiles the input images into a sequence of non-overlapping patches, followed by a linear projection applied to each patch. The resulting feature sequence is then processed by a transformer architecture \citep{vaswani2017attention}.  

We re-use the ImageNet training pipeline of our first experiment for training entangled versions of the vision transformer. Due to memory and compute limitations, train the smallest architecture (ViT-S) consisting of 12 layers, each with a feature dimension of 384, 6 attention heads, a feed-forward hidden size of 1536, input image size of 224-by-224, and a patch size of 32-by-32 pixels. For the optimization we employ AdamW \citep{loshchilov2017decoupled} and train for 300 epochs. As ViTs are known to require more training data than CNNs, we do not evaluate them on the CIFAR datasets.

The results in Table \ref{tab:vit} show that, conversely to the CNN, a spatial entanglement degrades the performance of a ViT. However, a channel-wise and channel+spatial entanglement improves accuracy. Our results suggest that transformer architecture makes use of the residual connections differently than residual CNNs. Similar observations have been reported before \citep{raghu2021vision}.

\subsection{Entanglement in Recurrent Networks} 
\noindent In this section, we evaluate entangled mappings in the context of recurrent networks. In RNNs, the mapping is applied to the network's hidden-to-hidden update transition. Consequently, the entanglement affects how an RNN's memory changes over time. As $\gamma$ in Eq. \ref{eq:gamma} increases, the memory content of a single unit distributes to other units over time. We hypothesize that the entanglement reduces the RNN's memory capacity but might improve learning to store and retrieve. On the other hand, an orthogonal entanglement might have little effect on the capacity but negatively influences the network's ability to store and load the content. We test this hypothesis in a series of four sequence classification tasks with different memory-horizons. 

\noindent \textit{RNNs Task Setup and Datasets.} The first evaluation concerns the sequential MNIST task \citep{lecun1998gradient}, where a 28-by-28 gray-scale image representing a digit from 0 to 9 is fed into the RNN in a pixel-after-pixel fashion, i.e., as a sequence of 784 steps. The objective is to classify which digit the sequence represents. The second task is a modification of the first task, where the pixels' order is permuted, i.e., altering the temporal correlation of the features.

\begin{table}[t]
\centering
\small
\caption{Sequence classification tasks for entangled LSTM models. Mean and standard deviation ($n=5$). Best values in bold.}
\begin{adjustbox}{width=1\columnwidth}
\begin{tabular}{lcccc}
    \toprule
    Entanglement  & Sequential MNIST & Permutated  & IMDB sentiment & IMDB sentiment \\
    factor & & Sequential MNIST & (unidirectional RNN) & (bidirectional RNN) \\
    \midrule
    $\gamma$ = 0 & \textbf{98.79}\% $\pm$ 0.04      & {93.31}\% $\pm$ 0.18 & \textbf{86.33}\% $\pm$ 0.21 & \textbf{86.42}\% $\pm$ 0.61 \\
    $\gamma$ = 0.1 &  82.87\% $\pm$ 5.29 & 92.16\% $\pm$ 0.17 & 84.74\% $\pm$ 0.61  &  83.44\% $\pm$ 2.29 \\
    $\gamma$ = 0.25 &  46.80\% $\pm$ 7.95 & 90.74\% $\pm$ 0.24 & 82.95\% $\pm$ 1.44  &  83.31\% $\pm$ 0.75 \\
    $\gamma$ = 0.5 &  37.06\% $\pm$ 3.35 & 87.06\% $\pm$ 2.89 & 82.89\% $\pm$ 1.97   & 81.44\% $\pm$ 1.34  \\
    $\gamma$ = 0.75 &  34.00\% $\pm$ 1.36 & 87.11\% $\pm$ 0.34 &  81.59\% $\pm$ 1.60 & 79.65\% $\pm$ 2.82  \\
    $\gamma$ = 1.0 &  45.39\% $\pm$ 3.01 & 85.49\% $\pm$ 1.24 & 79.23\% $\pm$ 0.78  & 76.85\% $\pm$ 2.58  \\
    \midrule
    Orthogonal & 98.01\% $\pm$ 0.17 & \textbf{95.74}\% $\pm$ 0.10 & 75.67\% $\pm$ 2.85 & 74.06\% $\pm$ 3.63\\
    \midrule
    No residual &  26.47\% $\pm$ 7.67 & 85.13\% $\pm$ 0.48 &  78.12\% $\pm$ 0.84 &  73.84\% $\pm$ 1.71 \\
    \bottomrule
\end{tabular}
\end{adjustbox}
\label{tab:lstm}
\end{table}

For our second evaluation, we use the IMDB sentiment analysis dataset benchmarked \citep{maas2011learning}. The dataset consists of 25,000 training and 25,000 test sentences, each corresponding to either positive or negative sentiment. The sentences are tokenized with a vocabulary consisting of 20,000 most frequent words.

In the third experiment, we train a stack of two entangled LSTM layers that process the text sequence in the reading direction, i.e., from left-to-right. The data is truncated to 200 tokens. Sequences that are shorter are right-padded to 200 tokens with empty tokens. As the sequences are shorter than in our first two tasks , long-term memory formation is less of an issue in this dataset. In the fourth experiment, we let both LSTM layers traverse over the sequences from left-to-right and right-to-left and concatenate both traverses' outputs before feeding them into the next layer. Both LSTMs are composed of 256 units. We use Adam \citep{kingma2014adam} with a learning rate of 0.0005.

\noindent \textbf{Discussing Results for Entangled Recurrent Networks.}
The results are shown in Table \ref{tab:lstm}. As expected, entangling an LSTM network's memory element toward a constant-ones matrix consistently decreases its long-term learning performance. However, counterintuitively, this reduction is also present in sentiment datasets, which arguably have less long-term components.
We conclude that a memory entanglement of an RNN is complex and affects other properties of the network compared to feed-forward models.

Orthogonal LSTM outperforms the standard LSTM on the permuted sequential MNIST dataset but performs worse than having no memory at all in both NLP experiments. RNNs that use a unitary or orthogonal hidden-to-hidden transition matrix \citep{arjovsky2016unitary} are able to learn long-term dependencies. For instance, in the task permuted and sequential MNIST tasks with a memory horizon of 784, the orthogonal variant of the LSTM achieves excellent performance.

Although orthogonal RNNs can store information for a very long time, the representation of this information changes after each RNN step due to the non-identity hidden-to-hidden transition. As a result, at the sequence end, the stored information looks different depending on when it was stored.
Consequently, orthogonal/unitary RNNs realize an inductive bias of making their memory content sensitive to when and for how long some information was stored. Conversely, they struggle with transferring representations learned from one time step to another time step. In some application domains such as NLP tasks, such time-domain knowledge transfer is crucial and explains the poor generalization of the orthogonal LSTM on these datasets.
On the other hand, on the permuted sequential MNIST, the knowledge of the exact temporal location is very important as the permutation has destroyed the natural temporal structure.

\section{Conclusion}
\noindent In this paper, we introduced entangled residual mappings as a framework to generalize residual connections beyond identity mappings across various deep learning architectures. Within this framework, we studied the role of eigenvalues, sparsity, and orthogonality of residual mappings. We performed a comprehensive experimental evaluation of residual mappings' role in different architectures ranging from advanced ResNets and recurrent networks to Transformers. We showed that incorporating structural biases into residual connections can improve representation learning.

\begin{wraptable}[8]{r}{0.5\textwidth}
\vspace{0mm}
    \centering
    \caption{Entangles nets with different depth trained on ImageNet.}
   \begin{adjustbox}{width=0.5\columnwidth}
    \begin{tabular}{l|cccc}\toprule
         Entanglement &  ResNet34 & ResNet50 & ResNet101 & ResNet152\\\midrule
         Identity & \textbf{72.15}\% & 76.12\% & 79.00\% & 80.47\%\\
         Orthogonal & 71.36\% & 75.53\% & 78.25\% & 79.54\% \\
         Spatial ($\gamma=0.1$) & 70.96\% & \textbf{76.31}\% & \textbf{80.19}\% & \textbf{81.67}\% \\\bottomrule
    \end{tabular}
    \end{adjustbox}
    \label{tab:depth}
\end{wraptable}
\noindent \textbf{Entanglement gets more prominent with increasing depth}
Here, we evaluate how the depth of a network modulates the effects of entangled residual connections. In particular, we vary the depth of a residual trained on ImageNet entangled by either an identity, orthogonal, or spatial ($\gamma=0.1$) mapping for 90 epochs. The results in Table \ref{tab:depth} show that orthogonal entanglement hurts performance independently of the network's depth, which is consistent with our earlier observations. Moreover, we observed that spatial entanglement slightly degrades the performance on the shallow network, but improves accuracy with increasing depth of the network.

%

\noindent \textbf{Limitations.} Performance gain by entangled mappings although improved compared to identity mappings in some experiments is marginal. This is due to the fact that entanglement does not aim to disrupt the iterative feature refinement property of residual networks with identity mappings. Instead, the framework suggests theoretically and experimentally that iterative feature refinement should not be disrupted, if so, the performance degrades as seen with cases where the entanglement coefficient was set too high.

\section*{Acknowledgments} 
M.L. and T.A.H. are supported in part by the Austrian Science Fund (FWF) under grant Z211-N23 (Wittgenstein Award). R.H. and D.R. are partially supported by Boeing and MIT. Z.B. is supported by the Doctoral College Resilient Embedded Systems, which is run jointly by the TU Wien's Faculty of Informatics and the UAS Technikum Wien. R.G. is partially supported by the Horizon 2020 Era-Permed project Persorad. This research was partially sponsored by the United States Air Force Research Laboratory and the United States Air Force Artificial Intelligence Accelerator and was accomplished under Cooperative Agreement Number FA8750-19-2-1000. The views and conclusions contained in this document are those of the authors and should not be interpreted as representing the official policies, either expressed or implied, of the United States Air Force or the U.S. Government. The U.S. Government is authorized to reproduce and distribute reprints for Government purposes notwithstanding any copyright notation herein. This work was further supported by The Boeing Company and the Office of Naval Research (ONR) Grant N00014-18-1-2830.




\end{document}